\newcommand{\tensor}{\otimes}
\newcommand{\bs}{\backslash}
\newcommand{\s}{\slash}
\newcommand{\fdia}{\Diamond}
\newcommand{\gbox}{\Box}
\newcommand{\arr}[3]{#1:#2\longrightarrow #3}
\renewcommand{\Diamond}{\lozenge}
\newcommand{\pijl}{\rightarrow}
\newcommand{\comp}{\circ}
\newcommand{\resright}{\rhd}
\newcommand{\resleft}{\lhd}
\newcommand{\resdia}{\triangledown}
\newcommand{\resdiainv}{\resdia^{-1}}
\newcommand{\resrightinv}{\resright^{-1}}
\newcommand{\resleftinv}{\resleft^{-1}}
\newcommand{\xlefta}{\alpha_{\diamond}^{l}}
\newcommand{\xleftc}{\sigma_{\diamond}^{l}}
\newcommand{\xrighta}{\alpha_{\diamond}^{r}}
\newcommand{\xrightc}{\sigma_{\diamond}^{r}}
\newcommand{\Xlefta}{\widehat{\alpha}_{\diamond}^{l}}
\newcommand{\Xleftc}{\widehat{\sigma}_{\diamond}^{l}}
\newcommand{\Xrighta}{\widehat{\alpha}_{\diamond}^{r}}
\newcommand{\Xrightc}{\widehat{\sigma}_{\diamond}^{r}}
\newcommand{\F}[1]{\lceil #1 \rceil}
\newcommand{\Arrow}[1]{\xlongrightarrow{\displaystyle #1}}
\newcommand{\reals}{\mathbb{R}}
\newcommand{\arrover}[3]{#2\xlongrightarrow{#1} #3}
\newcommand{\G}[1]{\lfloor #1 \rfloor}
\newtheorem{theorem}{Theorem}
\theoremstyle{definition}
\newtheorem{definition}{Definition}
\title{Lexical and Derivational Meaning in Vector-Based Models of Relativisation}
\author{
Michael Moortgat\inst{1}
\and
    Gijs Wijnholds\inst{2}
}
\institute{
  Utrecht University,
  The Netherlands\\
  \email{m.j.moortgat@uu.nl}
\and
   Queen Mary University of London,
   United Kingdom\\
   \email{g.j.wijnholds@qmul.ac.uk}\\
 }
\authorrunning{Moortgat and Wijnholds}
\titlerunning{Vector-Based Models of Relativisation}
\begin{document}

\maketitle

\begin{abstract}
Sadrzadeh et al (2013) present a compositional distributional analysis of relative clauses in English in 
terms of the Frobenius algebraic structure of finite dimensional vector spaces. 
The analysis relies on distinct type assignments and lexical recipes for subject vs object relativisation.
The situation for Dutch is different: because of the verb final nature of Dutch, relative clauses are
ambiguous between a subject vs object relativisation reading. Using an extended version of Lambek calculus,
we present a compositional distributional framework that accounts for this derivational ambiguity, and that
allows us to give a single meaning recipe for the relative pronoun reconciling the Frobenius semantics
with the demands of Dutch derivational syntax.
\end{abstract}

% The table of contents below is added for your convenience. Please do not use
% the table of contents if you are preparing your paper for publication in the
% EPiC Series or Kalpa Publications series

%\setcounter{tocdepth}{2}
%{\small
%\tableofcontents}

%\section{To mention}
%
%Processing in EasyChair - number of pages.
%
%Examples of how EasyChair processes papers. Caveats (replacement of EC
%class, errors).

%------------------------------------------------------------------------------
\section{Introduction}
\label{sect:introduction}

Compositionality, as a structure-preserving mapping from a syntactic source to a target interpretation,
is a fundamental design principle both for the set-theoretic models of formal semantics and for syntax-sensitive
vector-based accounts of natural language meaning, see \cite{Frege} for discussion. For typelogical grammar
formalisms, to obtain a compositional interpretation, we have to specify how the Syn-Sem homomorphism acts on \emph{types}
(basic and complex) and on \emph{proofs} (derivations, again basic (axioms) or compound, obtained by inference steps).
There is a tension here between lexical and derivational aspects of meaning: the derivational
aspects relate to the composition operations associated with the inference steps that put together
phrases out of more elementary parts; the atoms for this composition process are the meanings of
the lexical constants associated with the axioms of a derivation. 

Relative clause structures form a suitable testbed to study the interaction between these two aspects of meaning,
and they have been well-studied in the formal and in the distributional settings. Informally, a restrictive relative clause
(`books that Alice read') has an intersective interpretion. In the formal semantics account, this interpretation is obtained by
modeling both the head noun (`books') and the relative clause body (`Alice read \textvisiblespace') as (characteristic functions of)
sets (type $e\rightarrow t$); the relative pronoun can then be interpreted as the intersection operation.
In distributional accounts such as \cite{coecke2013lambek}, full noun phrases and simple common nouns are interpreted
in the same semantic space, say $\textsf{N}$, distinct from the sentence space $\textsf{S}$.
In this setting, element-wise multiplication, which preserves non-null context features, is a natural candidate
for an intersective interpretation; in the case at hand this means element-wise multiplication of a vector in $\textsf{N}$ 
interpreting the head noun, with a vector interpretation obtained from the relative clause body. To achieve this effect,
\cite{sadrzadeh2013frobenius} rely on the Frobenius algebraic structure of \textbf{FVect}, which provides operations for (un)copying,
insertion and deletion of vector information. A key feature of their account is that it relies on \emph{structure-specific} solutions
of the lexical equation: subject and object relative clauses are obtained from distinct type assignments
to the relative pronoun (Lambek types $(n\bs n)/(np\bs s)$ vs $(n\bs n)/(s/np)$), associated with distinct
instructions for meaning assembly.

For a language like Dutch, such an account is problematic. Dutch subordinate clause order has the SOV pattern Subj--Obj--TV,
i.e.~a transitive verb is typed as $np\bs(np\bs s)$, selecting its arguments uniformly to the left.
As a result, example (\ref{nlvse})(a) is ambiguous between a subject vs object relativisation interpretation:
it can be translated as either (b) or (c). The challenge here is twofold: at the syntactic level, we have to provide a 
\emph{single} type assignment to the relative pronoun that can withdraw either a subject or an object hypothesis from
the relative clause body; at the semantic level, we need a \emph{uniform} meaning recipe for the relative pronoun
that will properly interact with the derivational semantics. 
\begin{equation}\label{nlvse}
\begin{tabular}{cll}
$a$ & mannen$_{n}$ die$_{?}$ vrouwen$_{np}$ haten$_{np\bs(np\bs s)}$ & (ambiguous)\\
$b$ & men who hate women & (subject rel)\\
$c$ & men who(m) women hate & (object rel)\\
\end{tabular}
\end{equation}
The paper is structured as follows. In \S \ref{sect:syntax}, we present an extended version of Lambek calculus, and show how it accounts
for the derivational ambiguity of Dutch relative clauses. In \S \ref{subsect:derivational}, we define the interpretation homomorphism
that associates syntactic derivations with composition operations in a vector-based semantic model. The derivational semantics
thus obtained is formulated at the type level, i.e.~it abstracts from the contribution of individual lexical items.
In \S \ref{subsect:lexical}, we bring in the lexical semantics, and show how the Dutch relative pronoun can be given a uniform
interpretation that properly interacts with the derivational semantics. The discussion in \S \ref{sect:discussion} compares
the distributional and formal semantics accounts of relativisation.

%------------------------------------------------------------------------------
\begin{figure}[t]
% identity, composition
\[\infer[]{\arr{1_A}{A}{A}}{}
\qquad
\infer[]{\arr{g\circ f}{A}{C}}{\arr{f}{A}{B} & \arr{g}{B}{C}}\]

% residuation
\[\infer[]{\arr{\resdia f}{A}{\gbox B}}{\arr{f}{\fdia A}{B}}
\qquad
\infer[]{\arr{\resright f}{A}{C/B}}{\arr{f}{A\tensor B}{C}}
\qquad
\infer[]{\arr{\resleft f}{B}{A\bs C}}{\arr{f}{A\tensor B}{C}}\]

\[\infer[]{\arr{\resdiainv g}{\fdia A}{B}}{\arr{g}{A}{\gbox B}}
\qquad
\infer[]{\arr{\resrightinv g}{A\tensor B}{C}}{\arr{g}{A}{C/B}}
\qquad
\infer[]{\arr{\resleftinv g}{A\tensor B}{C}}{\arr{g}{B}{A\bs C}}\]

\[\arr{\xlefta}{\fdia A\tensor(B\tensor C)}{(\fdia A\tensor B)\tensor C}
\qquad
\arr{\xrighta}{(A\tensor B)\tensor\fdia C}{A\tensor(B\tensor\fdia C)}\]
\[\arr{\xleftc}{\fdia A\tensor(B\tensor C)}{B\tensor(\fdia A\tensor C)}
\qquad
\arr{\xrightc}{(A\tensor B)\tensor\fdia C}{(A\tensor\fdia C)\tensor B}\]

\caption{\textbf{NL}$_{\diamond}$. Residuation rules; extraction postulates.}
\label{nldia}
\end{figure}

\section{Syntax}
\label{sect:syntax}

Our syntactic engine is \textbf{NL}$_{\diamond}$ \cite{moortgat1996multimodal}: the extension of Lambek's \cite{lam61}
Syntactic Calculus with an adjoint pair of control modalities $\fdia, \gbox$. 
The modalities play a role similar to that of the exponentials of linear logic: 
they allow one to introduce controlled, rather than global, forms of reordering and restructuring.
In this paper, we consider the controlled associativity and commutativity postulates of \cite{moortgatconstants}.
One pair, $\xlefta,\xleftc$, allows a $\fdia$-marked formula to reposition itself on
left branches of a constituent tree; we use it to model the SOV extraction patterns in Dutch.
A symmetric pair $\xrighta,\xrightc$ would capture the non-local extraction dependencies in
an SVO language such as English. Lambek \cite{lam88} has shown how deductions in a syntactic
calculus can be viewed as arrows in a category. Figure \ref{nldia} presents \textbf{NL}$_{\diamond}$ in this format.

For parsing, we want a proof search procedure that doesn't rely on cut. 
Consider the rules in Figure \ref{monox}, expressing the monotonicity properties of the type-forming
operations, and recasting the postulates in rule form. It is routine to show that these are \emph{derived} rules
of inference of $\textbf{NL}_{\diamond}$. In \cite{mmmoot13} it is shown that by adding them to the residuation rules of Figure \ref{nldia}, one obtains
a system equivalent to a display sequent calculus enjoying cut-elimination. By further restricting to
\emph{focused} derivations, proof search is free of spurious ambiguity. 

\begin{figure}[b]
% monotonicity
\[\infer[]{\arr{\Diamond f}{\fdia A}{\fdia B}}{\arr{f}{A}{B}}
\qquad
\infer[]{\arr{\Box f}{\gbox A}{\gbox B}}{\arr{f}{A}{B}}\]

\[\infer[]{\arr{f\tensor g}{A\tensor C}{B\tensor D}}{\arr{f}{A}{B} & \arr{g}{C}{D}}
\qquad
\infer[]{\arr{f/g}{A/D}{B/C}}{\arr{f}{A}{B} & \arr{g}{C}{D}}
\qquad
\infer[]{\arr{f\bs g}{B\bs C}{A\bs D}}{\arr{f}{A}{B} & \arr{g}{C}{D}}
\]

% leftward extraction (rule form)
\[\infer[]{\arr{\Xlefta f}{\fdia A\tensor(B\tensor C)}{D}}{\arr{f}{(\fdia A\tensor B)\tensor C}{D}}
\qquad
\infer[]{\arr{\Xleftc f}{\fdia A\tensor(B\tensor C)}{D}}{\arr{f}{B\tensor (\fdia A\tensor C)}{D}}
\]
\caption{\textbf{NL}$_{\diamond}$. Monotonicity; leftward extraction (rule version).}
\label{monox}
\end{figure}

We are ready to return to our example (\ref{nlvse})(a). A type assignment $(n\bs n)/(\fdia\gbox np\bs s)$ to the
relative pronoun `die' accounts
for the derivational ambiguity of the phrase. The derivations agree on the initial steps

\begin{equation}\label{incommon}
\infer[\triangleleft^{-1}]{ n_{} \otimes (((n_{} \bs n_{})/ ( \fdia  \gbox np_{} \bs s_{})) \otimes (np_{} \otimes (np_{} \bs (np_{} \bs s_{}))))  \longrightarrow n_{}}{
 \infer[\triangleright^{-1}]{ ((n_{} \bs n_{})/ ( \fdia  \gbox np_{} \bs s_{})) \otimes (np_{} \otimes (np_{} \bs (np_{} \bs s_{})))  \longrightarrow  n_{} \bs n_{} }{
 \infer[\slash ]{{ (n_{} \bs n_{})/ ( \fdia  \gbox np_{} \bs s_{}) } \longrightarrow  (n_{} \bs n_{})/ (np_{} \otimes (np_{} \bs (np_{} \bs s_{}))) }{
 \infer[\bs ]{{ n_{} \bs n_{} } \longrightarrow  n_{} \bs n_{} }{
 \infer{{n_{}} \longrightarrow n_{}}{} &  \infer{{n_{}} \longrightarrow n_{}}{}} &  
 \infer[]{ np_{} \otimes (np_{} \bs (np_{} \bs s_{}))  \longrightarrow   \fdia  \gbox np_{} \bs s_{} }{\vdots}}}}
\end{equation}
but then diverge in how the relative clause body is derived:
\begin{equation}\label{divergence}
\addtolength{\inferLineSkip}{1pt}
\infer[\triangleleft^{-1}]{ np_{} \otimes (np_{} \bs (np_{} \bs s_{}))  \longrightarrow   \fdia  \gbox np_{} \bs s_{} }{
 \infer[\bs ]{{ np_{} \bs (np_{} \bs s_{}) } \longrightarrow  np_{} \bs ( \fdia  \gbox np_{} \bs s_{}) }{
 \infer{{np_{}} \longrightarrow np_{}}{} &  \infer[\bs ]{ \framebox{${ np_{} \bs s_{} } \longrightarrow   \fdia  \gbox np_{} \bs s_{} $} }{
 \infer[\triangledown^{-1}]{ \fdia  \gbox np_{} \longrightarrow np_{}}{ \infer[\gbox ]{{ \gbox np_{}} \longrightarrow  \gbox np_{}}{ \infer{{np_{}} \longrightarrow np_{}}{}}} &  \infer{{s_{}} \longrightarrow s_{}}{}}}}
\quad
\infer[\triangleleft]{ np_{} \otimes (np_{} \bs (np_{} \bs s_{}))  \longrightarrow   \fdia  \gbox np_{} \bs s_{} }{
\infer[\widehat{\sigma}_{\diamond}^{l}]{\fdia  \gbox np_{} \otimes ( np_{} \otimes (np_{} \bs (np_{} \bs s_{})) ) \longrightarrow s_{} }{
\infer[\triangleleft^{-1}]{ np_{} \otimes ( \fdia  \gbox np_{} \otimes (np_{} \bs (np_{} \bs s_{})) ) \longrightarrow s_{} }{
 \infer[\triangleleft^{-1}]{  \fdia  \gbox np_{} \otimes (np_{} \bs (np_{} \bs s_{}))  \longrightarrow  np_{} \bs s_{} }{
 \infer[\bs ]{ \framebox{$ { np_{} \bs (np_{} \bs s_{}) } \longrightarrow   \fdia  \gbox np_{} \bs (np_{} \bs s_{}) $} }{
 \infer[\triangledown^{-1}]{ \fdia  \gbox np_{} \longrightarrow np_{}}{ \infer[\gbox ]{{ \gbox np_{}} \longrightarrow  \gbox np_{}}{ \infer{{np_{}} \longrightarrow np_{}}{}}} &  \infer[\bs ]{{ np_{} \bs s_{} } \longrightarrow  np_{} \bs s_{} }{
 \infer{{np_{}} \longrightarrow np_{}}{} &  \infer{{s_{}} \longrightarrow s_{}}{}}}}}}}
\end{equation}
In the derivation on the left, the $\fdia\gbox np$ hypothesis is linked to the \emph{subject} argument of the verb;
in the derivation on the right to the \emph{object} argument, reached via the $\Xleftc$ reordering step.

%------------------------------------------------------------------------------
\section{From source to target}
\label{sect:interpretation}

%------------------------------------------------------------------------------
\subsection{Derivational semantics}
\label{subsect:derivational}

Compositional distributional models are obtained by
defining a homomorphism sending types and derivations of a syntactic source system
to their counterparts in a symmetric compact closed category (sCCC); the concrete model
for this sCCC then being finite dimensional vector spaces (\textbf{FVect}) and (multi)linear maps.
Such interpretation homomorphisms have been defined for pregroup grammars, Lambek calculus and CCG
in \cite{coecke2013lambek,maillardclark2014}. We here define the interpretation for $\textbf{NL}_{\diamond}$, 
starting out from \cite{wijnholds2014categorical}.

Recall first that a \emph{compact closed category} (CCC) is monoidal, i.e.~it has an associative $\tensor$ with unit $I$;
and for every object there is a left and a right adjoint satisfying
\[A^{l}\tensor A\Arrow{\epsilon^{l}} I \Arrow{\eta^{l}} A\tensor A^{l}
\qquad
A\tensor A^{r}\Arrow{\epsilon^{r}} I \Arrow{\eta^{r}} A^{r}\tensor A\]
In a \emph{symmetric} CCC, the tensor moreover is commutative, and we can write $A^{*}$ for the collapsed left and right adjoints.

In the concrete instance of \textbf{FVect}, the unit $I$ stands for the field $\mathbb{R}$; identity maps, composition and tensor product are defined as usual. Since bases of vector spaces are fixed in concrete models, there is only one natural way of defining a basis for a \emph{dual space}, so that $V^{*} \cong V$. In concrete models we may collapse the adjoints completely.

The $\epsilon$ map takes inner products, whereas the $\eta$ map (with $\lambda = 1$) introduces an identity tensor as follows:

\begin{center}
	\begin{tabular}{l@{\hskip 1em}c@{\hskip 4em}ccl}
			$ \epsilon_V : V \tensor V \pijl \reals$ & given by & $\sum\limits_{ij} v_{ij} (\vec{e}_i \tensor \vec{e}_j)$ & $\mapsto$ & $\sum\limits_{i} v_{ii}$ \\[1.5em]
			$ \eta_V : \reals \pijl V \tensor V$ & given by & $\lambda$ & $\mapsto$ & $\sum\limits_i \lambda (\vec{e}_i \tensor \vec{e}_i) $
	\end{tabular}
\end{center}

\paragraph*{Interpretation: types} At the type level, the interpretation function $\F{\cdot}$ assigns a vector space to the atomic types of $\textbf{NL}_{\diamond}$;
for complex types we set $\F{\fdia A}=\F{\gbox A}=\F{A}$, i.e.~the syntactic control operators are transparent for the
interpretation; the binary type-forming operators are interpreted as

\[\F{A\tensor B}=\F{A}\tensor\F{B}
\quad
\F{A/B}=\F{A}\tensor\F{B}^{*}
\quad
\F{A\bs B}=\F{A}^{*}\tensor\F{B}\]
\paragraph*{Interpretation: proofs} From the linear maps interpreting the premises of the $\textbf{NL}_{\diamond}$ inference rules, 
we want to compute the linear map interpreting the conclusion. Identity and composition are immediate:
$\F{1_A}=1_{\F{A}}$, $\F{g\circ f}=\F{g}\circ\F{f}$. 
For the residuation inferences, from the map $\arr{\F{f}}{\F{A}\otimes\F{B}}{\F{C}}$ interpreting the premise, we obtain
\[\begin{array}{c}
\F{\resright f} = \F{A}\Arrow{1_{\F{A}}\tensor\eta_{\F{B}}}\F{A}\tensor\F{B}\tensor\F{B}^{*}\Arrow{\F{f}\otimes 1_{\F{B}^{*}}}\F{C}\tensor\F{B}^{*}\\
\F{\resleft f} = \F{B}\Arrow{\eta_{\F{A}}\otimes 1_{\F{B}}}\F{A}^{*}\tensor\F{A}\tensor\F{B}\Arrow{1_{\F{A}^{*}}\otimes\F{f}}\F{A}^{*}\tensor\F{C}
\end{array}\]
For the inverses, from maps $\arr{\F{g}}{\F{A}}{\F{C/B}}$, $\arr{\F{h}}{\F{B}}{\F{A\bs C}}$ for the premises, we obtain
\[\begin{array}{c}
\F{\resrightinv g} = \F{A}\tensor\F{B}\Arrow{\F{g}\tensor 1_{\F{B}}}\F{C}\tensor\F{B}^{*}\tensor\F{B}\Arrow{1_{\F{C}}\tensor\epsilon_{\F{B}}}\F{C}\\
\F{\resleftinv h} = \F{A}\tensor\F{B}\Arrow{1_{\F{A}}\tensor\F{h}}\F{A}\tensor\F{A}^{*}\tensor\F{C}\Arrow{\epsilon_{\F{A}}\tensor 1_{\F{C}}}\F{C}
\end{array}\]
Monotonicity. The case of parallel composition is immediate: $\F{f\tensor g}=\F{f}\tensor\F{g}$. For the slash cases,
from $\arr{\F{f}}{\F{A}}{\F{B}}$ and $\arr{\F{g}}{\F{C}}{\F{D}}$, we obtain

\begin{tikzpicture}
\matrix (m) [matrix of math nodes, row sep=3em, column sep=4em]
{\F{f/g}= & \F{f\bs g}= \\[-2em]
\F{A}\tensor\F{D}^{*} &  \F{B}^{*}\tensor\F{C} \\
\F{B}\tensor\F{C}^{*}\tensor\F{C}\tensor\F{D}^{*} & \F{B}^{*}\tensor\F{A}\tensor\F{A}^{*}\tensor\F{D} \\
\F{B}\tensor\F{C}^{*}\tensor\F{D}\tensor\F{D}^{*} & \F{B}^{*}\tensor\F{B}\tensor\F{A}^{*}\tensor\F{D} \\
\F{B}\tensor\F{C}^{*} & \F{A}^{*}\tensor\F{D} \\};

\path[-stealth] (m-2-1) edge node [left] {$\F{f}\tensor\eta_{\F{C}}\tensor 1_{\F{D}^{*}}$} (m-3-1);
\path[-stealth] (m-3-1) edge node [left] {$1_{\F{B}\tensor\F{C}^{*}}\tensor\F{g}\tensor 1_{\F{D}^{*}}$} (m-4-1);
\path[-stealth] (m-4-1) edge node [left] {$1_{\F{B}\tensor\F{C}^{*}}\tensor\epsilon_{\F{D}}$} (m-5-1);

\path[-stealth] (m-2-2) edge node [right] {$1_{\F{B}^{*}}\tensor\eta_{\F{A}}\tensor \F{g}$} (m-3-2);
\path[-stealth] (m-3-2) edge node [right] {$1_{\F{B}^{*}}\tensor\F{f}\tensor 1_{\F{A}^{*}\tensor\F{D}}$} (m-4-2);
\path[-stealth] (m-4-2) edge node [right] {$\epsilon_{\F{B}}\tensor 1_{\F{A}^{*}\tensor\F{D}}$} (m-5-2);
\end{tikzpicture}

\noindent
Interpretation for the extraction structural rules is obtained via the standard associativity and symmetry maps of \textbf{FVect}:
$\F{\Xlefta f} = f \comp \alpha$ and $\F{\Xleftc f} = f \comp \alpha^{-1}\comp (\sigma \tensor 1_A) \comp \alpha$ and similarly for the rightward extraction rules.

\paragraph*{Simplifying the interpretation}\label{simpl} Whereas the syntactic derivations of $\textbf{NL}_{\diamond}$ proceed in cut-free fashion, 
the interpretation of the inference rules given above introduces detours (sequential composition of maps) that can be removed. We use
a generalised notion of Kronecker delta, together with Einstein summation notation, to concisely express the fact that the interpretation
of a derivation is fully determined by the identity maps that interpret its axiom leaves, realised as the $\epsilon$ or $\eta$
identity matrices depending on their (co)domain signature.

% NOTATION CONVENTION
Recall that vectors and linear maps over the real numbers can be equivalently expressed as (multi-dimensional) arrays of numbers. The essential information one needs to keep track of are the coefficients of the tensor: for a vector $\mathbf{v}\in\reals^{n}$ we write $v_i$ (with $i$ ranging from $1$ to $n$), an $n\times m$ matrix $\textbf{A}$ is
expressed as $A_{ij}$, an $n\times m\times p$ cube $\textbf{B}$ as $B_{ijk}$, with the indices each time ranging over the dimensions. The Einstein summation
convention on indices then states that in an expression involving multiple tensors, indices occurring once give rise to a tensor product, whereas indices occurring
twice are contracted.
Without explicitly writing a tensor product $\tensor$, the tensor product of a vector $\mathbf{a}$ and a matrix $\mathbf{A}$ thus can be written as $a_i A_{jk}$; the inner product between vectors $\mathbf{a},\mathbf{b}$ is $a_i b_i$. Matrix application $\mathbf{A}\mathbf{a}$ is rendered as $A_{ij}a_j$, i.e.~the contraction happens over the second dimension of $\mathbf{A}$ and $\mathbf{a}$.
For tensors of arbitrary rank we use uppercase to refer to lists of indices: we write a tensor $\mathbf{T}$ as $T_I$. Tensor application then becomes $T_{IJ}R_J$, for some tensor $\mathbf{R}$ of lower rank. 

% KRONECKER DELTA
The identity matrix is given by the Kronecker delta (left), the identity tensor by its generalisation (right):
\begin{center}
\begin{tabular}{c@{\hskip 4em}c}
	$\delta^i_j = \begin{cases} 1 & i=j \\ 0 & \text{otherwise} \end{cases}$ & $\delta^I_J = \begin{cases} 1 & I_k = J_k \ \text{for all $k$} \\ 0 & \text{otherwise} \end{cases}$
\end{tabular}
\end{center}
The attractive property of the (generalised) Kronecker delta is that it expresses unification of indices: $\delta^i_j a_i = a_j$, which is simply a renaming of the index; the inner product can be computed by $\delta^i_j a_i b_j = a_j b_j$. Left on its own, it is simply an identity matrix/tensor.

With the Kronecker delta, the composition of matrices $\mathbf{B} \comp \mathbf{A}$ is expressible as $\delta^{j}_{k} A_{ij} B_{kl}$, which is the same as $A_{ij} B_{jl}$ (or $A_{ik}B_{kl}$). We can show that order of composition is irrelevant:
	$$ \delta^j_k A_{ij} \delta^l_m B_{kl} C_{mn} = A_{ij} B_{jl} C_{ln} = \delta^l_m \delta^j_k A_{ij} B_{kl} C_{mn}$$
The special cases of tensor product of generalised Kronecker deltas is given by concatenating the index lists:
\[\delta^I_{J} \tensor \delta^K_L = \delta^{IK}_{JL} \]
expressing the fact that $1_A \tensor 1_B = 1_{A \tensor B}$.

Since the generalised Kronecker delta is able to do renaming, take inner product, and insert an identity tensor, depending on the number of arguments placed behind it, it will represent precisely the $1_A, \epsilon_A, \eta_A$ maps discussed above. In this respect, the interpretation can be simplified and we can label the proof system (with formulas already interpreted) with these generalised Kronecker deltas. The effect of the residuation rules and the structural rules is to only change the (co)domain signature of a Kronecker delta, whereas the rules for axioms and monotonicity also act on the Kronecker delta itself:
% identity, composition
\[\infer[1_A]{\arrover{\delta^{I}_{J}}{A_I}{A_J}}{}\]
% monotonicity
\[\infer[\tensor]{\arrover{\delta^{IK}_{JL}}{A\tensor C}{B\tensor D}}{\arrover{\delta^I_J}{A}{B} & \arrover{\delta^K_L}{C}{D}}
\qquad
\infer[\s]{\arrover{\delta^{IK}_{JL}}{A\tensor D}{B\tensor C}}{\arrover{\delta^I_J}{A}{B} & \arrover{\delta^K_L}{C}{D}}
\qquad
\infer[\bs]{\arrover{\delta^{IK}_{JL}}{B\tensor C}{A\tensor D}}{\arrover{\delta^I_J}{A}{B} & \arrover{\delta^K_L}{C}{D}}
\]
%Showing that this labelling is compatible with the general interpretation of proofs, is a matter of induction over the size of proofs.
In Appendix \ref{simplappendix} we show that this labelling is correct for the general interpretation of proofs in \S\ref{subsect:derivational}.
%------------------------------------------------------------------------------
\subsection{Lexical semantics}
\label{subsect:lexical}

For the general interpretation of types and proofs given above, a proof  $\arr{f}{A}{B}$ is interpreted as a linear map $\F{f}$ sending an element belonging to $\F{A}$, the semantic space interpreting $A$, to an element of $\F{B}$. The map is expressed at the general level of types, and completely abstracts from \emph{lexical} semantics. 
For the computation of concrete interpretations, we have to bring in the meaning of the lexical items.  
For $A = A_1\tensor\cdots\tensor A_n$, this means applying the map $\F{f}$ to $\mathbf{w}_{1}\tensor\cdots\tensor\mathbf{w}_{n}$,
the tensor product of the word meanings making up the phrase under consideration, to obtain a meaning $M\in\F{B}$,
the semantic space interpreting the goal formula.

With the index notation introduced above, $\F{f}$ is expressed in the form of a generalised Kronecker delta, which 
is applied to the tensor product of the word meanings in index notation to produce the final meaning in $\F{B}$.
In (\ref{four}) we illustrate with the interpretation of some proofs derived from the same axiom leaves, $np\longrightarrow np$ and $s\longrightarrow s$.
Assuming $\F{np}=\textsf{N}$ and $\F{s}=\textsf{S}$, these correspond to identity maps on $\textsf{N}$ and $\textsf{S}$. We use the
convention that the formula components of the endsequent are labelled in alphabetic order; the correct indexing for the Kronecker
delta is obtained by working back to the axiom leaves.

\begin{equation}\label{four}
\begin{array}{cr@{\qquad}l}
a & \textsf{dream}^{np\bs s} \longrightarrow np\bs s & \mathbf{dream}^{\textsf{N}\otimes\textsf{S}}_{i,j}\Arrow{\delta^{k,j}_{i,l}}T\,_{k,l}^{\textsf{N}\otimes\textsf{S}}\\[2ex]

b & \textsf{poets}^{np} \otimes \textsf{dream}^{np\bs s} \longrightarrow s & \mathbf{poets}^{\textsf{N}}_{i} \otimes \mathbf{dream}^{\textsf{N}\otimes\textsf{S}}_{j,k}\Arrow{\delta^{i,k}_{j,l}}V\,_{l}^{\textsf{S}}\\[2ex]

c & \textsf{poets}^{np}\longrightarrow s/(np\bs s) & \mathbf{poets}^{\textsf{N}}_{i}\Arrow{\delta^{i,l}_{k,j}}R\,_{j,k,l}^{\textsf{S}\otimes\textsf{N}\otimes\textsf{S}}\\[2ex]

%\textsf{poets}^{np}\longrightarrow (s/np)\bs s & \mathbf{poets}^{\textsf{N}}_{i}\Arrow{\delta^{i,j}_{k,l}}R\,_{j,k,l}^{\textsf{S}\otimes\textsf{N}\otimes\textsf{S}}\\
\end{array}
\end{equation}
(\ref{four})(a) expresses the linear map from $\mathbf{dream}\in\textsf{N}\otimes\textsf{S}$ to a tensor $T\in\textsf{N}\otimes\textsf{S}$.
Because we have $T = \delta^{k,j}_{i,l} \mathbf{dream}_{i,j} = \mathbf{dream}_{k,l}$, this is in fact the identity map.
(\ref{four})(b) computes a vector $V\in\textsf{S}$ with $V = \delta^{i,k}_{j,l} \mathbf{poets}_{i}\tensor\mathbf{dream}_{j,k} = \mathbf{poets}_{j}\tensor\mathbf{dream}_{j,l}$.
In (\ref{four})(c) we arrive at an interpretation $R\in\textsf{S}\tensor\textsf{N}\tensor\textsf{S}$ with $R = \delta^{i,l}_{k,j} \mathbf{poets}_{i} = \delta^{l}_{j} \mathbf{poets}_{k}$. Note that we wrote the tensor product symbol $\tensor$ explicitly.

In the case of our relative clause example (\ref{nlvse}), the derivational ambiguity of (\ref{divergence}) gives 
rise to two ways of obtaining a vector $\mathbf{v}\in\textsf{N}$. They
differ in whether $l$, the index of the $\fdia\gbox np$ hypothesis in the relative pronoun type,
contracts with index $p$ for the subject argument of the verb (\ref{subjectreading}) or with 
the direct object index $o$ (\ref{objectreading}).
\begin{equation}\label{subjectreading}
\begin{array}{rl}
 & \mathbf{mannen}_{i}\tensor\mathbf{die}_{jklm}\tensor\mathbf{vrouwen}_{n}\tensor\mathbf{haten}_{opq} \Arrow{\delta^{i,k,l,m,n}_{j,r,p,q,o}} \mathbf{v}^{subj}_{r}\in\textsf{N} \\[2ex]
\mathbf{v}^{subj}_{j} = & \mathbf{mannen}_{i}\tensor\mathbf{die}_{ijkl}\tensor\mathbf{vrouwen}_{m}\tensor\mathbf{haten}_{mkl} \qquad (\textrm{relabeled}) \\
\end{array}
\end{equation}
\begin{equation}\label{objectreading}
\begin{array}{rl}
& \mathbf{mannen}_{i}\tensor\mathbf{die}_{jklm}\tensor\mathbf{vrouwen}_{n}\tensor\mathbf{haten}_{opq} \Arrow{\delta^{i,k,l,m,n}_{j,r,o,q,p}} \mathbf{v}^{obj}_{r}\in\textsf{N} \\[2ex]
 \mathbf{v}^{obj}_{j} = & \mathbf{mannen}_{i}\tensor\mathbf{die}_{ijkl}\tensor\mathbf{vrouwen}_{m}\tensor\mathbf{haten}_{kml} \qquad (\textrm{relabeled}) \\
\end{array}
\end{equation}

\medskip\noindent
The picture in Figure \ref{matching} expresses this graphically. 

\begin{figure}[h]
\begin{center}
\begin{tikzpicture}[every node/.style={anchor=base},xscale=.9,yscale=.9]
\node (n0) at (0,0) {\textsf{N}};
\node (n1) at (1.3,0) {\textsf{N}};
\node (t1) at (1.7,0.02) {$\tensor$};
\node (n2) at (2.1,0) {\textsf{N}};
\node (t2) at (2.5,0.02) {$\tensor$};
\node (n3) at (2.9,0) {\textsf{N}};
\node (t3) at (3.3,0.02) {$\tensor$};
\node (n4) at (3.7,0) {\textsf{S}};
\node (n5) at (5,0) {\textsf{N}};
\node (n6) at (6.4,0) {\textsf{N}};
\node (t4) at (6.8,0.02) {$\tensor$};
\node (n7) at (7.2,0) {\textsf{N}};
\node (t5) at (7.6,0.02) {$\tensor$};
\node (n8) at (8,0) {\textsf{S}};

\node (n9) at (2.1,2) {};
\node (n10) at (2.1,-1.5) {};

\draw (n0) edge [bend left=90] node[above=2pt] {$i$} (n1);
\draw (n3) edge [bend left=90,pos=.2] node[above=2pt] {$k$} (n6);
\draw (n4) edge [bend left=90, pos=.7] node[above=2pt] {$l$} (n8);
\draw (n5) edge [bend left=90, pos=.8] node[above=2pt] {$m$} (n7);
\draw (n0) edge [bend right=90]  node[below=2pt] {$i$} (n1);
\draw (n3) edge [bend right=100, pos=.2] node[below=2pt] {$k$} (n7);
\draw (n4) edge [bend right=90,pos=.7] node[below=2pt] {$l$} (n8);
\draw (n5) edge [bend right=90] node[below=2pt] {$m$} (n6);

\draw (n2) edge node[right] {$j$} (n9);
\draw (n2) edge node[right] {$j$} (n10);

\end{tikzpicture}
\caption{Matching diagrams for Dutch derivational ambiguity. Object relative (top), 
\textsf{mannen$_{i}$ die$_{ijkl}$ vrouwen$_{m}$ haten$_{kml}$} versus
subject relative (bottom) \textsf{mannen$_{i}$ die$_{ijkl}$ vrouwen$_{m}$ haten$_{mkl}$}.}
\label{matching}
\end{center}
\end{figure}

\paragraph*{Open class items vs function words} For open class lexical items,
concrete meanings are obtained distributionally. For function words, the relative
pronoun in this case, it makes more sense to assign them an interpretation
independent of distributions. To capture the intersective interpretation of
restrictive relative clauses, Sadrzadeh et al \cite{sadrzadeh2013frobenius} propose
to interpret the relative pronoun with a map that extracts a vector in the noun space
from the relative clause body, and then combines this by elementwise multiplication
with the vector for the head noun. Their account depends on the identification
$\F{np}=\F{n}=\textsf{N}$: noun phrases and simple common nouns are interpreted in the same
space; it expresses the desired meaning recipe for the relative pronoun with the aid of
(some of) the Frobenius operations that are available in a compact closed category:

\begin{equation}\label{frobenius}
\begin{tabular}{cccc}
	$\Delta : A \pijl A \tensor A$ & $\mu : A \tensor A \pijl A$ & $\iota: A \pijl I$ &  $\zeta : I \pijl A$ \\
\end{tabular}
\end{equation}
In the case of $\textbf{FVect}$, $\Delta$ takes a vector and places its values on the diagonal of a square matrix, whereas $\mu$ extracts the diagonal from a square matrix.
The $\iota$ and $\zeta$ maps respectively sum the coefficients of a vector or introduce a vector with the value $1$ for all of its coefficients.

\begin{center}
	\begin{tabular}{l@{\hskip 1em}c@{\hskip 4em}ccl}
			$ \Delta_V : V \pijl V \tensor V$ & given by & $\sum\limits_i v_i \vec{e}_i$ & $\mapsto$ & $\sum\limits_i v_i (\vec{e}_i \tensor \vec{e}_i)$ \\[1em]
			$ \iota_V : V \pijl \reals$ & given by & $\sum\limits_i v_i \vec{e}_i$ & $\mapsto$ & $\sum\limits_i v_i $\\[1em]
			$ \mu_V : V \tensor V \pijl V$ & given by & $\sum\limits_{ij} v_{ij} (\vec{e}_i \tensor \vec{e}_j)$ & $\mapsto$ & $\sum\limits_{i} v_{ii} \vec{e}_i$ \\[1em]
			$ \zeta_V : \reals \pijl V$ & given by & $\lambda$ & $\mapsto$ & $\sum\limits_i \lambda \vec{e}_i$
	\end{tabular}
\end{center}

The analysis of \cite{sadrzadeh2013frobenius} uses a pregroup syntax and addresses relative clauses in English.
It relies on distinct pronoun types for subject and object relativisation. In the subject relativisation case,
the pronoun lives in the space $\textsf{N}\tensor\textsf{N}\tensor\textsf{S}\tensor\textsf{N}$, corresponding to
$n^{r}\,n\,s^{l}\,np$,
the pregroup translation of a Lambek type $(n\bs n)/(np\bs s)$; for object relativisation, the pronoun lives in
$\textsf{N}\tensor\textsf{N}\tensor\textsf{N}\tensor\textsf{S}$, corresponding to $n^{r}\,n\,np^{ll}\, s^{l}$,
the pregroup translation of $(n\bs n)/(s/np)$.

For the case of Dutch, the homomorphism $\F{\cdot}$ of \S\ref{subsect:derivational} sends the relative pronoun type
$(n\bs n)/(\fdia\gbox np\bs s)$ to the space $\textsf{N}\tensor\textsf{N}\tensor\textsf{N}\tensor\textsf{S}$.
This means we can import the 
pronoun interpretation for that space from \cite{sadrzadeh2013frobenius}, which now will produce
both the subject and object relativisation interpretations through its interaction with the
derivational semantics.

\begin{equation}\label{diemu}
\textbf{die} \quad=\quad (1_{\textsf{N}} \tensor \mu_{\textsf{N}}\tensor 1_{\textsf{N}} \tensor \zeta_{\textsf{S}}) \comp (\eta_{\textsf{N}} \tensor \eta_{\textsf{N}})
\end{equation}

Intuitively, the recipe (\ref{diemu}) says that the pronoun consists of a cube (in $\textsf{N}\tensor\textsf{N}\tensor\textsf{N}$)
which has $1$ on its diagonal and $0$ elsewhere, together with a vector in the sentence space $\textsf{S}$ with all its entries $1$. 
Substituting this lexical recipe in the tensor contraction equations of (\ref{subjectreading}) and (\ref{objectreading})
yields the desired final semantic values (\ref{subjectmeaning}) and (\ref{objectmeaning}) for subject and object relativisation respectively.
We write $\odot$ for elementwise multiplication; the summation over the \textsf{S} dimension reduces the rank-3 $\textsf{N}\tensor\textsf{N}\tensor\textsf{S}$
interpretation of the verb to a rank-2 matrix in $\textsf{N}\tensor\textsf{N}$, with rows for the verb's object, columns for the subject.
This matrix is applied to the vector $\mathbf{vrouwen}$
either forward in (\ref{objectmeaning}), where `vrouwen' plays the subject role, 
or backward in (\ref{subjectmeaning}) before being elementwise multiplied with the vector for \textsf{mannen}.
%and the summation sign actually sums up the verb over the $\textsf{S}$ dimension to get a tensor of the right rank.
\begin{equation}\label{subjectmeaning}
	%\mathbf{v}^{subj}_{k} 
	(\ref{subjectreading}) \quad=\quad \mathbf{mannen} \odot \Big[\Big(\sum\limits_{S} \mathbf{haten}\Big)^{\textsf{T}}\mathbf{vrouwen}\Big]
\end{equation}
\begin{equation}\label{objectmeaning}
	%\mathbf{v}^{obj}_{k} 
	(\ref{objectreading}) \quad=\quad \mathbf{mannen} \odot \Big[\Big(\sum\limits_{S} \mathbf{haten}\Big)\mathbf{vrouwen}\Big]
\end{equation}

Returning to English, notice that the pregroup type assignment $n^{r}\,n\,np^{ll}\, s^{l}$ for object relativisation
in \cite{sadrzadeh2013frobenius}
is restricted to cases where the `gap' in the relative clause body occupies the final position. To cover these non-subject
relativisation patterns in general, also with respect to positions internal to the relative clause body, we would
use an $\textbf{NL}_{\diamond}$ type $(n\bs n)/(s/\fdia\gbox np)$ for the pronoun, together with the rightward extraction
postulates $\xrighta,\xrightc$ of Figure \ref{nldia}. For English subject relativisation, the simple pronoun type $(n\bs n)/(np\bs s)$
will do, as this pattern doesn't require any structural reasoning.

%------------------------------------------------------------------------------
\section{Discussion}
\label{sect:discussion}
We briefly compare the distributional and the formal semantics accounts, highlighting their similarities.
In the formal semantics account, the interpretation homomorphism sends syntactic types to their semantic counterparts.
Syntactic types are built from atoms, for example $s$, $np$, $n$ for sentences, noun phrases and common nouns; assuming
semantic atoms $e$, $t$ and function types built from them, one can set $\F{s}=t$, $\F{np}=e$, $\F{n}=e\rightarrow t$,
and $\F{A/B}=\F{B\bs A}=\F{B}\rightarrow\F{A}$. Each semantic type $A$ is assigned an interpretation domain $D_A$,
with $D_{e} = E$, for some non-empty set $E$ (the discussion domain), $D_{t}=\{0,1\}$ (truth values), and $D_{A\rightarrow B}$
funtions from $D_A$ to $D_B$.

In this setup, a syntactic derivation $A_1,\ldots A_n\Rightarrow B$ is interpreted by means of a linear lambda
term $M$ of type $\F{B}$, with parameters $x_i$ of type $\F{A_i}$ --- linearity resulting from the fact that
the syntactic source doesn't provide the copying/deletion operations associated with the structural rules
of Contraction and Weakening.

As in the distributional model discussed here, the proof term $M$ is an instruction for meaning assembly
that abstracts from lexical semantics. In (\ref{proofterms}) below, one finds the proof terms for English
subject (a) and object (b) relativisation. The parameter $w$ stands for the head noun, $f$ for the verb,
$y$ and $z$ for its object and subject arguments; parameter $x$ for the relative pronoun has type
$(e\rightarrow t)\rightarrow (e\rightarrow t)\rightarrow e\rightarrow t$.

\begin{equation}\label{proofterms}
\begin{array}{cl@{\qquad}l}
(a) & n,(n\bs n)/(np\bs s), (np\bs s)/np, np\Rightarrow n & 
(x_{\mathit{who}}^{}\ \lambda z^{e}.(f_{\mathit{}}^{e\rightarrow e\rightarrow t}\ y^{e}\ z_{\mathit{}}^{e})\ w_{\mathit{}}^{e\rightarrow t})\\[2ex]
(b) & n,(n\bs n)/(s/np), np, (np\bs s)/np\Rightarrow n &
(x_{\mathit{who}}^{}\ \lambda y^{e}.(f_{\mathit{}}^{e\rightarrow e\rightarrow t}\ y^{e}\ z_{\mathit{}}^{e})\ w_{\mathit{}}^{e\rightarrow t})\\
\end{array}
\end{equation}

To obtain the interpretation of `men who hate women' vs `men who(m) women hate', one substitutes lexical meanings
for the parameters of the proof terms. In the case of the open class items `men', `hate', `women', these will
be non-logical constants with an interpretation depending on the model. For the relative pronoun, we substitute
an interpretation independent of the model, expressed in terms of the logical constant $\wedge$, leading to
the final interpretations of (\ref{finalterms}), after normalisation.

\begin{equation}\label{whoterm}
x_{\mathit{who}} := \lambda x^{e\rightarrow t}\lambda y^{e\rightarrow t}\lambda z^{e}.((x\ z) \wedge ((y\ z))
\end{equation}

\begin{equation}\label{finalterms}
\begin{array}{cl}
(a) & \lambda x.((\textsc{men}\ x) \wedge (\textsc{hate}\ \textsc{women}\ x))\\[2ex]
(b) & \lambda x.((\textsc{men}\ x) \wedge (\textsc{hate}\ x\ \textsc{women}))\\
\end{array}
\end{equation}
Notice that the lexical meaning recipe for the relative pronoun goes beyond linearity:
to express the set intersection interpretation, the bound $z$ variable is copied over the conjuncts of $\wedge$. 
By encapsulating this copying operation in the lexical semantics, one avoids compromising the derivational
semantics. In this respect, the formal semantics account makes the same design choice regarding the
division of labour between derivational and lexical semantics as the distributional account, where the
extra expressivity of the Frobenius operations is called upon for specifying the lexical meaning recipe
for the relative pronoun.

%------------------------------------------------------------------------------
\section{Acknowledgments}
%We thank the people who helped us. All remaining errors are our own.
We thank Giuseppe Greco for comments on an earlier version.
The second author would also like to thank Mehrnoosh Sadrzadeh for the many discussions on compositional distributional semantics and Frobenius operations,
and Rob Klabbers for his interesting remarks on index notation. The second author gratefully acknowledges support by a Queen Mary Principal's Research Studentship,
the first author the support of the Netherlands Organisation for Scientific Research (NWO, Project 360-89-070,
\emph{A composition calculus for vector-based semantic modelling with a localization for Dutch}).
\label{sect:acks}

\label{sect:bib}
\bibliographystyle{plain}

\newpage
\appendix
\section{Simplifying the Interpretation}\label{simplappendix}
The simplification of section \ref{simpl} uses generalised Kronecker deltas to interpret the proof terms of the proof system, leading to a relabelling of the proof system with formulas interpreted. The rules that change the generalised Kronecker delta are shown in section \ref{simpl}, the full system is shown in Figure \ref{simplsystem}. In this appendix we show that the simplification holds.

\begin{figure}[h]
% identity, composition
\[\infer[1_A]{\arrover{\delta^{I}_{J}}{A_I}{A_J}}{}\]

% residuation
\[\infer[\resdia]{\arrover{\delta^I_J}{A}{ B}}{\arrover{\delta^I_J}{ A}{B}}
\qquad
\infer[\resright]{\arrover{\delta^I_J}{A}{C\tensor B}}{\arrover{\delta^I_J}{A\tensor B}{C}}
\qquad
\infer[\resleft]{\arrover{\delta^I_J}{B}{A\tensor C}}{\arrover{\delta^I_J}{A\tensor B}{C}}\]

\[\infer[\resdiainv]{\arrover{\delta^I_J}{ A}{B}}{\arrover{\delta^I_J}{A}{ B}}
\qquad
\infer[\resrightinv]{\arrover{\delta^I_J}{A\tensor B}{C}}{\arrover{\delta^I_J}{A}{C\tensor B}}
\qquad
\infer[\resleftinv]{\arrover{\delta^I_J}{A\tensor B}{C}}{\arrover{\delta^I_J}{B}{A\tensor C}}\]

% monotonicity
\[\infer[\fdia]{\arrover{\delta^I_J}{ A}{ B}}{\arrover{\delta^I_J}{A}{B}}
\qquad
\infer[\gbox]{\arrover{\delta^I_J}{ A}{ B}}{\arrover{\delta^I_J}{A}{B}}\]

\[\infer[\tensor]{\arrover{\delta^{IK}_{JL}}{A\tensor C}{B\tensor D}}{\arrover{\delta^I_J}{A}{B} & \arrover{\delta^K_L}{C}{D}}
\qquad
\infer[\s]{\arrover{\delta^{IK}_{JL}}{A\tensor D}{B\tensor C}}{\arrover{\delta^I_J}{A}{B} & \arrover{\delta^K_L}{C}{D}}
\qquad
\infer[\bs]{\arrover{\delta^{IK}_{JL}}{B\tensor C}{A\tensor D}}{\arrover{\delta^I_J}{A}{B} & \arrover{\delta^K_L}{C}{D}}
\]

% extraction (rule form)
%\infer[]{\arrover{}{}{}}{\arrover{}{}{}}
\[\infer[\Xlefta]{\arrover{\delta^I_J}{ A\tensor (B\tensor C)}{D}}{\arrover{\delta^I_J}{( A\tensor B)\tensor C}{D}}
\qquad
\infer[\Xrighta]{\arrover{\delta^I_J}{(A\tensor B)\tensor  C}{D}}{\arrover{\delta^I_J}{A\tensor(B\tensor  C)}{D}}
\]

\[\infer[\Xleftc]{\arrover{\delta^I_J}{ A\tensor (B\tensor C)}{D}}{\arrover{\delta^I_J}{B\tensor ( A\tensor C)}{D}}
\qquad
\infer[\Xrightc]{\arrover{\delta^I_J}{(A\tensor B)\tensor  C}{D}}{\arrover{\delta^I_J}{(A\tensor  C)\tensor B}{D}}
\]

\caption{\textbf{NL}$_{\diamond}$. Rules annotated with their generalised Kronecker deltas.}
\label{simplsystem}
\end{figure}

Analogous to $\F{f}$, we write $\G{f}$ for the generalised Kronecker delta associated with proof term $f$. We define the expressions of a compact closed category for generalised Kronecker deltas.
Then we show that for any proof $\arr{f}{A}{B}$ we have that $\F{f} =  \G{f}$. Proving this is done by induction over the size of proofs. The crucial point is that composition of two generalised Kronecker deltas is determined by their domain and codomain.

\paragraph*{The CCC structure of generalised Kronecker deltas} To give a generalised Kronecker delta an interpretation as a map, we need to give its domain and codomain. We will write for a generalised Kronecker delta, $\delta^I_J : A_M \pijl B_N$ to indicate that $A$ is the domain, $B$ the codomain, and moreover that concrete tensors in $A$ will be labelled with list $M$, and output tensors will be labelled with $N$. Writing $+$ for list concatenation and $\pi(L)$ for any permutation of list $L$, we then assume that $I + J = \pi(M+N)$. We can now go on and define the maps of a compact closed category in the generalised Kronecker delta form.

Note the way generalised Kronecker deltas are rewritten: a generalised Kronecker delta $\delta^I_J$ has pairs of indices on top and bottom that are linked. Whenever $\delta^I_J$ has an index occuring twice, a rewrite is done: let $(a,b),(c,d)$ be two pairs of indices, $a,c$ on top and $b,d$ on the bottom, that have an index in common, say $a = c$. Then we remove from $\delta^I_J$ the pair $(a,b)$, and replace the pair $(c,d)$ by $(b,d)$. This lowers the rank of the generalised Kronecker delta with 2, which is in line with the idea of the tensor contraction that is to be performed by the common index. This generalises to lists of indices: writing $(A,B)$ and $(C,D)$ for lists of indices such that pairs $(a,b)$ come from $A,B$ and pairs $(c,d)$ come from $C, D$, if we have that $A = C$ we can immediately remove $(A,B)$ and replace $(C,D)$ by $(B,D)$. The whole rewriting continues until there are only unique indices left. Write $(\delta^I_J)^{\ast}$ (or $\delta^{I^{\ast}}_{J^{\ast}}$) for the generalised Kronecker delta obtained by rewrites from the original $\delta^I_J$.
Then, for two generalised Kronecker deltas, we get the relation
	$$ \delta^I_J \delta^K_L = \delta^{(I+K)^{\ast}}_{(J+L)^{\ast}}$$
In particular this means for $\delta^I_J$ and $\delta^K_L$ with no indices in common that $\delta^{(I+K)^{\ast}}_{(J+L)^{\ast}} = \delta^{I+K}_{J+L}$. 

Another special case is when we have $\delta^{A+I}_{B+J}$ where $A$ occurs in $I + J$, but $B$ does not. Then we have that we remove $A$ and $B$ and, for each index $a$ in $I$ and $J$, we substitute the corresponding index $b$ in $B$:
	$$\delta^{(A+I)^{\ast}}_{(B+J)^{\ast}} = (\delta^I_J \{ A \mapsto B \})^{\ast}$$
When $I$ and $J$ have no elements in common, then the right hand side is already fully rewritten since $B$ is unique to $I$ and $J$, allowing us to drop the asterisk.
We use these properties in the below definition for tensor product and composition of generalised Kronecker deltas, and in the proof in the next paragraph.
\begin{definition} The maps in \textbf{FVect} are defined in terms of generalised Kronecker deltas according the the list below:
\begin{enumerate}
	\item For any vector space $V$ of rank $n$, the identity map $1_V : V \pijl V$ is given by the generalised Kronecker delta $$\delta^{i_1,i_2,...,i_n}_{j_1,j_2,...,j_n} : V_{i_1,i_2,...,i_n} \pijl V_{j_1,j_2,...,j_n}$$ On an element $\mathbf{v} \in V$, represented in index notation by $v_{i_1,...,i_n}$, we get simply a renaming because $$\delta^{i_1,i_2,...,i_n}_{j_1,j_2,...,j_n}v_{i_1,...,i_n} = v_{j_1,...,j_n}$$
	\item For any vector space $V$ of rank $n$, the $\epsilon_V : V \tensor V \pijl \reals$ map is given by the generalised Kronecker delta $$\delta^{i_1,i_2,...,i_n}_{j_1,j_2,...,j_n} : V_{i_1,i_2,...,i_n} \tensor V_{j_1,j_2,...,j_n} \pijl \reals$$ For two elements $\mathbf{v} \in V$ and $\mathbf{w} \in V$ represented by $v_{i_1,...,i_n}$ and $w_{j_1,...,j_n}$ we get the inner product between $\mathbf{v}$ and $\mathbf{w}$: $$\delta^{i_1,i_2,...,i_n}_{j_1,j_2,...,j_n}v_{i_1,...,i_n}w_{j_1,...,j_n} = v_{j_1,...,j_n}w_{j_1,...,j_n}$$
	\item For any vector space $V$ of rank $n$, the $\eta_V : \reals \pijl V \tensor V$ map is given by $$\delta^{i_1,i_2,...,i_n}_{j_1,j_2,...,j_n} : \reals \pijl V_{i_1,i_2,...,i_n} \tensor V_{j_1,j_2,...,j_n}$$ It is given no elements to juxtapose with and thus simply gives the identity matrix on $V$.
	\item Composition. Given two maps (left) and their generalised Kronecker delta representation (right)
		$$f : A \pijl B \qquad \delta^I_J : A_M \pijl B_{N_1}$$
		$$g : B \pijl C \qquad \delta^K_L : B_{N_2} \pijl C_O$$
		their composition $g \comp f : A \pijl C$ is represented by $$\delta^{(N_1 + I + K)^{\ast}}_{(N_2 + J + L)^{\ast}} : A_M \pijl C_O$$ We give the expression $\delta^{N_1}_{N_2} \delta^I_J \delta^K_L$ for the composition, exactly to identify the indices in the codomain of $f$ ($N_1$) with the indices in the domain of $g$ ($N_2$). Since $\delta^I_J$ and $\delta^K_L$ have no indices in common (this may be assumed without loss of generality), we have $\delta^I_J \delta^K_L = \delta^{I+K}_{J+L}$, but since $N_2$ occurs in $K+L$ and $N_1$ occurs in $I+J$, we will have a sequence of rewrites to do and so we get $\delta^{(N_1 + I + K)^{\ast}}_{(N_2 + J + L)^{\ast}}$.
	\item Tensor product. Given two maps (left) and their generalised Kronecker delta representation (right)
		$$f : A \pijl B \qquad \delta^I_J : A_M \pijl B_N$$
		$$g : C \pijl D \qquad \delta^K_L : C_O \pijl D_P$$
		their tensor product $f \tensor g : A \tensor C \pijl B \tensor D$ is represented by $$\delta^{I+J}_{K+L} : A_M \tensor C_O \pijl B_N \tensor D_P$$ Without loss of generality we may assume that $\delta^I_J$ and $\delta^K_L$ have no indices in common (if they had, we could rename them). Since $I + J = \pi(M+N)$ and $K + L = \pi(O + P)$, we also have that $I + J + K + L = \pi(M + N + O + P)$. And since $\delta^I_J,\delta^K_L$ have no indices in common, we have that juxtaposing them gives $\delta^I_J\delta^K_L = \delta^{I+J}_{K+L}$.
	\item Associativity. Since the tensor product is associative on vectors, the associativity maps disappear in index notation. For vector spaces $A,B,C$ of rank $k,l,m$ respectively, the associativity map $\alpha_{A,B,C} : (A \tensor B) \tensor C \pijl A \tensor (B \tensor C)$ is represented as
		$$\delta^{M_1N_1O_1}_{M_2N_2O_2} : A_{M_1} \tensor B_{N_1} \tensor C_{O_1} \pijl A_{M_2} \tensor B_{N_2} \tensor C_{O_2}$$
		where $M_1,M_2$ have length $k$, $N_1,N_2$ have length $l$ and $O_1,O_2$ have length $m$.
		This acts simply as an identity map: on elements $\mathbf{a} \in A,\mathbf{b} \in B, \mathbf{c} \in C$, we get
		$$\delta^{M_1N_1O_1}_{M_2N_2O_2} a_{M_1} b_{N_1} c_{O_1} = a_{M_2} b_{N_2} c_{O_2}$$
		which is simply a renaming of the input. The inverse associativity map $\alpha^{-1} : A \tensor (B \tensor C) \pijl (A \tensor B) \tensor C$ is represented exactly the same and again works as an identity map.
	\item Symmetry. The tensor product is not commutative on vectors, but the generalised Kronecker delta for the symmetry map $\sigma_{A,B} : A \tensor B \pijl B \tensor A$ on vector spaces $A,B$ with rank $k,l$ respectively, performs an identity. The order of evaluation is given by the switch in indices in input and output. So $\sigma$ is represented by $$\delta^{M_1N_1}_{M_2N_2} : A_{M_1} \tensor B_{N_1} \pijl B_{N_2} \tensor A_{M_2}$$ On an input $\mathbf{a} \in A, \mathbf{b} \in B$, we get
		$$\delta^{M_1N_1}_{M_2N_2} a_{M_1}b_{N_1} = a_{M_2}b_{N_2} $$
		but here the order of the indices in the codomain dictates that the elements of $\mathbf{a}$ are placed after the elements of $\mathbf{b}$.
\end{enumerate}
\end{definition}

\paragraph*{Reducing the interpretation}
With the translation of maps of \textbf{FVect} in terms of generalised Kronecker deltas, we are ready to state our claim.

\begin{theorem} For any proof $\arr{f}{A}{B}$, we have that $\F{f} = \G{f}$.
\end{theorem}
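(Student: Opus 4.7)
The natural approach is structural induction on the derivation of $\arr{f}{A}{B}$, matching each inference rule of $\textbf{NL}_{\diamond}$ against the corresponding clause in the delta calculus. For the base case (identity axiom $1_A$), both sides yield the identity on $\F{A}$ by definition: $\F{1_A}=1_{\F{A}}$, while $\G{1_A}=\delta^I_J:A_I\to A_J$ is precisely the generalised Kronecker delta that the preceding definition identifies with $1_{\F{A}}$. The inductive steps apply the definitions of composition and tensor for deltas together with the induction hypothesis on the premises.

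First I would dispatch the transparent cases. The $\fdia$, $\gbox$, and residuation rules ($\resright$, $\resleft$, $\resdia$ and their inverses) change only the (co)domain signature and not the delta itself; on the $\F{\cdot}$ side they are built from the premise by pre- or post-composing with $\eta$ and $\epsilon$. Translating each such chain into delta form and invoking the composition rule, the inserted $\eta$ introduces a fresh matched pair of indices which the subsequent $\epsilon$ or the premise delta immediately contracts, so after the starred rewrite the result carries exactly the indices prescribed for the conclusion signature. For the structural rules $\Xlefta,\Xrighta,\Xleftc,\Xrightc$, the $\F{\cdot}$-interpretation prepends associativity and symmetry maps, and by the clauses of the previous section these are themselves identity deltas (only their input and output signatures are permuted); composition with them therefore leaves the underlying delta $\G{f}$ of the premise unchanged, matching the rules in Figure \ref{simplsystem}.

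The substantive cases are the monotonicity rules for $\s$ and $\bs$, where the $\F{\cdot}$-interpretation prescribes a three-arrow chain through $\eta$ and $\epsilon$. For $\F{f/g}$, with $\arr{\F{f}}{\F{A}}{\F{B}}$ given inductively by $\delta^I_J$ and $\arr{\F{g}}{\F{C}}{\F{D}}$ by $\delta^K_L$, I would translate each arrow in
\[\F{A}\tensor\F{D}^{*}\pijl\F{B}\tensor\F{C}^{*}\tensor\F{C}\tensor\F{D}^{*}\pijl\F{B}\tensor\F{C}^{*}\tensor\F{D}\tensor\F{D}^{*}\pijl\F{B}\tensor\F{C}^{*}\]
into delta form and then apply the composition clause. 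The inserted $\eta_{\F{C}}$ produces a matched pair on $\F{C}$ that is unified with the $\F{C}$-indices of $\delta^K_L$ carried by $\F{g}$, while the terminal $\epsilon_{\F{D}}$ cancels the $\F{D}$ pair coming out of $\F{g}$. After the starred rewrites only the indices of $\F{A},\F{D}^{*}$ survive in the domain and those of $\F{B},\F{C}^{*}$ in the codomain, yielding exactly $\delta^{IK}_{JL}$. The $\bs$ case is symmetric; the $\tensor$ case reduces directly to the tensor-of-deltas clause $\delta^I_J\tensor\delta^K_L=\delta^{IK}_{JL}$. A similar bookkeeping argument handles the general composition rule $\F{g\comp f}=\F{g}\comp\F{f}$, using that delta composition is defined by unifying the codomain indices of the first with the domain indices of the second.

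The main obstacle will be the index bookkeeping around the rewrites: one must check that every $\eta$ insertion and $\epsilon$ contraction in the $\F{\cdot}$-chain corresponds to exactly one starred identification $(I+K)^{*}_{(J+L)^{*}}$, with no index surviving spuriously or being duplicated. The cleanest way to manage this is to factor out a small lemma, to the effect that whenever $\eta_V$ is inserted between two deltas, one of whose signatures mentions $V$, the composite delta equals the one obtained by extending that signature directly --- i.e.\ that $\eta$ and $\epsilon$ act as unifiers in the index calculus. Once this lemma is in place, every rule of $\textbf{NL}_{\diamond}$ becomes a mechanical check against Figure \ref{simplsystem}, completing the induction.
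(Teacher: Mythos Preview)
Your proposal is correct and follows essentially the same approach as the paper: structural induction on derivations, with the axiom as base case, the unary and structural rules handled as signature-only changes, and the main work concentrated in the residuation and slash-monotonicity cases where $\eta$/$\epsilon$ chains must be reduced. The only notable difference is packaging: the paper carries out each case by a fully explicit sequence of index rewrites $(\delta^{\cdots}_{\cdots})^{\ast}$, whereas you propose to factor the repeated $\eta$-insert/$\epsilon$-contract pattern into a single unifier lemma and then apply it mechanically --- a cleaner presentation that the paper's brute-force calculations would in fact justify.
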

\begin{proof} By induction over the size of proofs. The base case is the case of the axiom, for which we have $\F{1_A} = 1_{\F{A}}$. The identity map is represented by $\delta^I_J : \F{A}_I \pijl \F{A}_J$ in the definition above. For the inductive step, we proceed by cases. Just as in the main text, for brevity we will suppress the $+$ symbol in list concatenation. We moreover write the composition symbol $\comp$ in between generalised Kronecker deltas when its is clear what is intended.
	\begin{enumerate}
		\item Residuation. There are six subcases to consider:
			\begin{enumerate}
				\item $\resleft f$: assuming that $\F{f} : \F{A \tensor B} \pijl \F{C}$ is equal to $\delta^I_J : \F{A}_M \tensor \F{B}_N \pijl \F{C}_O$, we need to show that $\F{\resleft f} : \F{B} \pijl \F{A \bs C}$ is equal to $\delta^I_J : \F{B}_N \pijl \F{A}_M \tensor \F{C}_O$. We compute
					$$\F{\resleft f} = (1_{\F{A}}\otimes\F{f}) \comp (\eta_{\F{A}}\otimes 1_{\F{B}})$$
					$$= (\delta^{M_4}_{M_5} \delta^I_J) \comp (\delta^{M_2}_{M_3} \delta^{N_2}_{N_3})$$
					$$= \delta^{M_4I}_{M_5J} \comp \delta^{M_2N_2}_{M_3N_3}$$
				which, by the codomain of $\eta_{\F{A}}\otimes 1_{\F{B}}$ and domain of $1_{\F{A}}\otimes\F{f}$ ($A \tensor A \tensor B$) means replacing as follows:
					$$= \delta^{M_2M_3N_3}_{M_4M\ N} \delta^{M_2N_2}_{M_3N_3}\delta^{M_4I}_{M_5J} $$
					$$= (\delta^{M_2M_3N_3M_2N_2M_4I}_{M_4M\ N\ M_3N_3M_5J})^{\ast} $$
					$$= (\delta^{M_3N_3M_4N_2M_4I}_{M\ N\ M_3N_3M_5J})^{\ast} $$
					$$= (\delta^{N_3M_4N_2M_4I}_{N\ M\ N_3M_5J})^{\ast} $$
					$$= (\delta^{M_4N_2M_4I}_{M\ N\ M_5J})^{\ast} $$
					$$= (\delta^{N_2M\ I}_{N\ M_5J})^{\ast} $$
					$$= \delta^{I}_{J} \{N \mapsto N_2, M \mapsto M_5\}$$
				So this gives a renaming of the original map, hence we need to rename the domain and codomain to $\F{B}_{N_2}, \F{A}_{M_5}, \F{C}_O$ but we can then also rename back to $\delta^I_J : \F{B}_N \pijl \F{A}_M \tensor \F{C}_O$.
				\item $\resleft^{-1} g$: assuming that $\F{g} : \F{B} \pijl \F{A \bs C}$ is equal to $\delta^I_J : \F{B}_N \pijl \F{A}_M \tensor \F{C}_O$ we need to show that $\F{\resleft^{-1} g} : \F{A \tensor B} \pijl \F{C}$ is equal to $\delta^I_J : \F{A}_M \tensor \F{B}_N \pijl \F{C}_O$. We compute
					$$\F{\resleft^{-1} g} = (\epsilon_{\F{A}}\tensor 1_{\F{C}}) \comp (1_{\F{A}}\tensor\F{g})$$
					$$= (\delta^{M_4}_{M_5} \delta^{O_2}_{O_3}) \comp (\delta^{M_2}_{M_3} \delta^{I}_{J})$$
					$$= \delta^{M_4O_2}_{M_5O_3} \comp \delta^{M_2I}_{M_3J}$$
				which, by the codomain of $1_{\F{A}}\tensor\F{g}$ and domain of $\epsilon_{\F{A}}\tensor 1_{\F{C}}$ ($A \tensor A \tensor C$) means replacing as follows:
					$$= \delta^{M_3M\ O}_{M_4M_5O_2} \delta^{M_2I}_{M_3J} \delta^{M_4O_2}_{M_5O_3}$$
					$$= (\delta^{M_3M\ O\ M_2IM_4O_2}_{M_4M_5O_2M_3JM_5O_3})^{\ast}$$
					$$= (\delta^{M\ O\ M_2IM_4O_2}_{M_5O_2M_4JM_5O_3})^{\ast}$$
					$$= (\delta^{O\ M_2IM_4O_2}_{O_2M_4JM\ O_3})^{\ast}$$
					$$= (\delta^{M_2IM_4O\ }_{M_4JM\ O_3})^{\ast}$$
					$$= (\delta^{IM_2O\ }_{JM\ O_3})^{\ast}$$									
					$$= \delta^I_J \{M \mapsto M_2, O \mapsto O_3\}$$
				which gives just a renaming of $\delta^I_J$.
				\item The cases of $\resright f, \resright^{-1} g$ are exactly symmetric to the cases of $\resleft, \resleft^{-1}$.
				\item $\resdia f, \resdiainv f$: since $\F{\resdia f} = \F{f} = \F{\resdiainv f}$, these cases are immediate. 
			\end{enumerate}
		\item Monotonicity. Here we need to consider five subcases.
			\begin{enumerate}
				\item $\Diamond f, \Box f$. These cases are again immediate since $\F{\Diamond f} = \F{f} = \F{\Box f}$.
				\item $f \tensor g$: since $\F{f \tensor g} = \F{f} \tensor \F{g}$ this case is immediate since the tensor product of generalised Kronecker deltas is performed by simply concatenating the upper and lower index lists together.
				\item $f \bs g$: assuming $\F{f} : \F{A} \pijl \F{B}$ and $\F{g} : \F{C} \pijl \F{D}$ are equal to $\delta^I_J : \F{A}_M \pijl \F{B}_N$ and $\delta^K_L : \F{C}_O \pijl \F{D}_P$ we need to show that $\F{f \bs g} : \F{B \bs C} \pijl \F{A \bs D}$ is equal to $\delta^{IK}_{JL} : \F{B}_N \tensor \F{C}_O \pijl \F{A}_M \tensor \F{D}_P$. We compute
					$$ \F{f \bs g} = (\epsilon_{\F{B}}\tensor 1_{\F{A}\tensor\F{D}}) \comp (1_{\F{B}}\tensor\F{f}\tensor 1_{\F{A}\tensor\F{D}}) \comp (1_{\F{B}}\tensor\eta_{\F{A}}\tensor \F{g})$$
					with (co)domain signature
					$$\F{B}\tensor\F{C} \pijl \F{B}\tensor\F{A}\tensor\F{A}\tensor\F{D} \pijl \F{B}\tensor\F{B}\tensor\F{A}\tensor\F{D} \pijl \F{A}\tensor\F{D}$$
					So writing the composition with generalised Kronecker deltas, we get
					$$= (\delta^{N_6}_{N_7}\delta^{M_6}_{M_7}\delta^{P_4}_{P_5}) \comp (\delta^{N_4}_{N_5}\delta^I_J\delta^{M_4}_{M_5}\delta^{P_2}_{P_3}) \comp (\delta^{N_2}_{N_3}\delta^{M_2}_{M_3}\delta^K_L)$$
					$$= \delta^{N_6M_6P_4}_{N_7M_7P_5} \comp \delta^{N_4IM_4P_2}_{N_5JM_5P_3} \comp \delta^{N_2M_2K}_{N_3M_3L}$$
					
					The composition is
					$$= \delta^{N_5N\ M_5P_3}_{N_6N_7M_6P_4} \delta^{N_3M_2M_3P\ }_{N_4M\ M_4P_2} \delta^{N_2M_2K}_{N_3M_3L} \delta^{N_4IM_4P_2}_{N_5JM_5P_3} \delta^{N_6M_6P_4}_{N_7M_7P_5}$$
					$$= (\delta^{N_5N\ M_5P_3N_3M_2M_3P\ N_2M_2KN_4IM_4P_2N_6M_6P_4}_{N_6N_7M_6P_4N_4M\ M_4P_2N_3M_3LN_5JM_5P_3N_7M_7P_5})^{\ast}$$
					$$= (\delta^{N\ M_5P_3N_3M_2M_3P\ N_2M_2KN_4IM_4P_2N_6M_6P_4}_{N_7M_6P_4N_4M\ M_4P_2N_3M_3LN_6JM_5P_3N_7M_7P_5})^{\ast}$$
					$$= (\delta^{M_5P_3N_3M_2M_3P\ N_2M_2KN_4IM_4P_2N_6M_6P_4}_{M_6P_4N_4M\ M_4P_2N_3M_3LN_6JM_5P_3N\ M_7P_5})^{\ast}$$
					$$= (\delta^{P_3N_3M_2M_3P\ N_2M_2KN_4IM_4P_2N_6M_6P_4}_{P_4N_4M\ M_4P_2N_3M_3LN_6JM_6P_3N\ M_7P_5})^{\ast}$$
					$$= (\delta^{N_3M_2M_3P\ N_2M_2KN_4IM_4P_2N_6M_6P_4}_{N_4M\ M_4P_2N_3M_3LN_6JM_6P_4N\ M_7P_5})^{\ast}$$
					$$= (\delta^{M_2M_3P\ N_2M_2KN_4IM_4P_2N_6M_6P_4}_{M\ M_4P_2N_4M_3LN_6JM_6P_4N\ M_7P_5})^{\ast}$$
					$$= (\delta^{M_3P\ N_2M\ KN_4IM_4P_2N_6M_6P_4}_{M_4P_2N_4M_3LN_6JM_6P_4N\ M_7P_5})^{\ast}$$
					$$= (\delta^{P\ N_2M\ KN_4IM_4P_2N_6M_6P_4}_{P_2N_4M_4LN_6JM_6P_4N\ M_7P_5})^{\ast}$$
					$$= (\delta^{N_2M\ KN_4IM_4P\ N_6M_6P_4}_{N_4M_4LN_6JM_6P_4N\ M_7P_5})^{\ast}$$
					$$= (\delta^{M\ KN_2IM_4P\ N_6M_6P_4}_{M_4LN_6JM_6P_4N\ M_7P_5})^{\ast}$$
					$$= (\delta^{KIM\ P\ N_2M_6P_4}_{LJM_6P_4N\ M_7P_5})^{\ast}$$
					$$= (\delta^{KIP\ N_2M\ P_4}_{LJP_4N\ M_7P_5})^{\ast}$$
					$$= (\delta^{KIP\ N_2M\ P_4}_{LJP_4N\ M_7P_5})^{\ast}$$
					$$= (\delta^{KIN_2M\ P}_{LJN\ M_7P_5})^{\ast}$$
					$$= \delta^{IK}_{JL} \{N \mapsto N_2, M \mapsto M_7, P \mapsto P_5\}$$
				\item $f \s g$: assuming $\F{f} : \F{A} \pijl \F{B}$ and $\F{g} : \F{C} \pijl \F{D}$ are equal to $\delta^I_J : \F{A}_M \pijl \F{B}_N$ and $\delta^K_L : \F{C}_O \pijl \F{D}_P$ we need to show that $\F{f \s g} : \F{A \s D} \pijl \F{B \s C}$ is equal to $\delta^{IK}_{JL} : \F{A}_M \tensor \F{D}_P \pijl \F{B}_N \tensor \F{C}_O$. We have the composition
					$$\F{f \s g} = (1_{\F{B}\tensor\F{C}}\tensor\epsilon_{\F{D}}) \comp (1_{\F{B}\tensor\F{C}}\tensor\F{g}\tensor 1_{\F{D}}) \comp (\F{f}\tensor\eta_{\F{C}}\tensor 1_{\F{D}})$$
					with (co)domain signature
					$$\F{A}\tensor\F{D} \pijl \F{B}\tensor\F{C}\tensor\F{C}\tensor\F{D} \pijl \F{B}\tensor\F{C} \tensor\F{D}\tensor\F{D} \pijl \F{B}\tensor\F{C}$$
					which gets interpreted as
					$$= (\delta^{N_4}_{N_5}\delta^{O_6}_{O_7}\delta^{P_6}_{P_7}) \comp (\delta^{N_2}_{N_3}\delta^{O_4}_{O_5}\delta^K_L\delta^{P_4}_{P_5}) \comp (\delta^I_J\delta^{O_2}_{O_3}\delta^{P_2}_{P_3})$$
					$$= \delta^{N_4O_6P_6}_{N_5O_7P_7} \comp \delta^{N_2O_4KP_4}_{N_3O_5LP_5} \comp \delta^{IO_2P_2}_{JO_3P_3}$$
					$$= \delta^{N_3O_5P\ P_5}_{N_4O_6P_6P_7} \delta^{N\ O_2O_3P_3}_{N_2O_4O\ P_4} \delta^{IO_2P_2}_{JO_3P_3} \delta^{N_2O_4KP_4}_{N_3O_5LP_5} \delta^{N_4O_6P_6}_{N_5O_7P_7}$$
					$$= (\delta^{N_3O_5P\ P_5N\ O_2O_3P_3IO_2P_2N_2O_4KP_4N_4O_6P_6}_{N_4O_6P_6P_7N_2O_4O\ P_4JO_3P_3N_3O_5LP_5N_5O_7P_7})^{\ast}$$
					$$= (\delta^{O_5P\ P_5N\ O_2O_3P_3IO_2P_2N_2O_4KP_4N_4O_6P_6}_{O_6P_6P_7N_2O_4O\ P_4JO_3P_3N_4O_5LP_5N_5O_7P_7})^{\ast}$$
					$$= (\delta^{P\ P_5N\ O_2O_3P_3IO_2P_2N_2O_4KP_4N_4O_6P_6}_{P_6P_7N_2O_4O\ P_4JO_3P_3N_4O_6LP_5N_5O_7P_7})^{\ast}$$
					$$= (\delta^{P_5N\ O_2O_3P_3IO_2P_2N_2O_4KP_4N_4O_6P}_{P_7N_2O_4O\ P_4JO_3P_3N_4O_6LP_5N_5O_7P_7})^{\ast}$$
					$$= (\delta^{N\ O_2O_3P_3IO_2P_2N_2O_4KP_4N_4O_6P}_{N_2O_4O\ P_4JO_3P_3N_4O_6LP_7N_5O_7P_7})^{\ast}$$
					$$= (\delta^{O_2O_3P_3IO_2P_2N\ O_4KP_4N_4O_6P}_{O_4O\ P_4JO_3P_3N_4O_6LP_7N_5O_7P_7})^{\ast}$$
					$$= (\delta^{O_3P_3IO_4P_2N\ O_4KP_4N_4O_6P}_{O\ P_4JO_3P_3N_4O_6LP_7N_5O_7P_7})^{\ast}$$
					$$= (\delta^{P_3IO_4P_2N\ O_4KP_4N_4O_6P}_{P_4JO\ P_3N_4O_6LP_7N_5O_7P_7})^{\ast}$$
					$$= (\delta^{IO_4P_2N\ O_4KP_4N_4O_6P}_{JO\ P_4N_4O_6LP_7N_5O_7P_7})^{\ast}$$
					$$= (\delta^{IP_2N\ O\ KP_4N_4O_6P}_{JP_4N_4O_6LP_7N_5O_7P_7})^{\ast}$$
					$$= (\delta^{IN\ O\ KP_2N_4O_6P}_{JN_4O_6LP_7N_5O_7P_7})^{\ast}$$
					$$= (\delta^{IO\ KP_2N\ O_6P}_{JO_6LP_7N_5O_7P_7})^{\ast}$$					
					$$= (\delta^{IKP_2N\ O\ P}_{JLP_7N_5O_7P_7})^{\ast}$$					
					$$= (\delta^{IKN\ O\ P}_{JLN_5O_7P_2})^{\ast}$$											
					$$= (\delta^{IKN\ O\ P}_{JLN_5O_7P_2})^{\ast}$$											
					$$= \delta^{IK}_{JL} \{ N \mapsto N_5, O \mapsto O_7, P \mapsto P_2\}$$
			\end{enumerate}
		\item Structural Rules. All four cases are immediate: for the $\Xlefta f$ and $\Xrighta f$ rules we have that associativity is already built into generalised Kronecker deltas so precomposing with an associativity map does not change the generalised Kronecker delta. For the slightly more complicated $\Xleftc f$ and $\Xrightc f$ rules we have that the associativity maps do not change anything, but moreover the symmetry map is effectuated by switching the order of the arguments to the generalised Kronecker delta (the domain), so the generalised Kronecker delta itself does not change.
	\end{enumerate}
\end{proof}

\end{document}